\documentclass{article}

\usepackage[preprint]{neurips_2026}

\usepackage[utf8]{inputenc} 
\usepackage[T1]{fontenc}    
\usepackage[colorlinks=true,linkcolor=blue,citecolor=blue,urlcolor=blue]{hyperref}      
\usepackage{xurl}            
\usepackage{booktabs}       
\usepackage{amsfonts}       
\usepackage{nicefrac}       
\usepackage{microtype}      
\usepackage{xcolor}         

\usepackage{amsmath,amssymb,mathtools}
\usepackage{amsthm}
\usepackage{physics}
\mathtoolsset{showonlyrefs}  

\usepackage{graphicx}
\usepackage{subcaption}   
\usepackage{float}
\usepackage{wrapfig}
\usepackage{tikz}

\usepackage{csquotes}
\usepackage{etoolbox}
\usepackage{makecell}
\usepackage{enumitem}     
\usepackage{pifont}
\usepackage{soul}         
\usepackage{cancel}
\usepackage{bbm}
\usepackage{mathrsfs}

\usepackage[ruled]{algorithm2e}
\DontPrintSemicolon

\theoremstyle{plain}
\newtheorem{theorem}{Theorem}[section]
\newtheorem{lemma}[theorem]{Lemma}
\newtheorem{example}[theorem]{Example}

\theoremstyle{definition}
\newtheorem{definition}[theorem]{Definition}
\newtheorem{assumption}[theorem]{Assumption}
\theoremstyle{remark}

\newcommand{\fS}{\mathcal{S}}\newcommand{\fA}{\mathcal{A}}

\newcommand{\fO}{\mathcal{O}}

\newcommand{\R}{\mathbb{R}}\newcommand{\E}{\mathbb{E}}

\title{On the Divergence of Differential Temporal Difference Learning without Local Clocks}

\author{%
David Antrobius \\
  Department of Computer Science\\
  University of Virginia\\
  \texttt{davidantrobius@email.virginia.edu} \\
  \And
  Shangtong Zhang \\
  Department of Computer Science\\
  University of Virginia\\
  \texttt{shangtong@virginia.edu} \\
}

\begin{document}

\maketitle

\begin{abstract}
    Learning rate is a critical component of reinforcement learning (RL).
    This work uses global and local clocks to distinguish two types of learning rates.
    The former is of the standard form $\alpha_t$ that depends only on the time step $t$ (i.e., a global clock).
    The latter is of the form $\alpha_{\nu(S_t, t)}$,
    where $\nu(s, t)$ counts the number of visits to state $s$ until time $t$ (i.e., a local clock).
    In discounted RL, an RL algorithm that is convergent with a local clock is always also convergent with a global clock, and vice versa.
    We are not aware of any counterexample.
    The key contribution of this work is to show that this nice correspondence breaks down in average-reward RL.
    Specifically,
    we construct a counterexample showing that although differential temporal difference learning is convergent with a local clock, it can diverge with a global clock.
    This counterexample closes the open problem in \citet{wan2020learning,blaser2026almost}.
\end{abstract}

\section{Introduction}

Tabular method is a fundamental class of algorithms in reinforcement learning (RL, \citet{sutton2018reinforcement}).
Looking back at the history of tabular RL, 
we can see a clear pattern that early theoretical analysis of tabular RL methods often uses a local clock in the learning rate.
Here, by a local clock in a learning rate, we mean a learning rate of the form $\alpha_{\nu(S_t, t)}$, where $\nu(s, t)$ counts the number of visits to state $s$ until time $t$.
This is in contrast to a learning rate with a global clock, which is of the form $\alpha_t$ that depends only on the time step $t$.
The global clock refers to the time step $t$ in the learning rate, while the local clock refers to the state-dependent visit count $\nu(s, t)$ in the learning rate. 
For example,
classical convergence proofs for $Q$-learning usually use local clocks \citep{watkins1989learning,watkins1992q,jaakkola1993convergence,tsitsiklis1994asynchronous,bertsekas1996neuro}.

We argue that the use of local clocks in theoretical analysis is mostly because of technical convenience.
Chapter~7 of \citet{borkar2009stochastic} gives a thorough exposition of the role of local clocks in the convergence analysis of stochastic approximation algorithms.
Intuitively, local clocks allow a more balanced updates across states, essentially converting the Markovian samples in RL into i.i.d. samples, 
which significantly simplifies the analysis.
However, a local clock is not really practitioner's first choice.
For example,
the entire textbook \citet{sutton2018reinforcement} does not mention nor use any local clock in the learning rates.
A global clock is preferred by most practitioners.

This theorist-practitioner gap is not a problem in the discounted RL setting.
Although early theoretical analysis of discounted RL methods often uses local clocks, 
recent works have shown that the same algorithms are also convergent with global clocks.
For example, the convergence of $Q$-learning with a global clock is also established by \citet{lee2019unified,chen2024lyapunov,liu2025ode}. 
To our knowledge, there is no known counterexample showing that an algorithm that is convergent with a local clock can diverge with a global clock in discounted RL.

In average-reward RL, 
early theoretical analysis such as RVI $Q$-learning also use local clocks in the learning rates \citep{abounadi2001learning}.
The seminal work \cite{wan2020learning} 
develops a new set of temporal difference algorithms for average-reward RL that removes the need for a reference state in RVI $Q$-learning.
The (off-policy) policy evaluation algorithm in \citet{wan2020learning} is called differential TD.
And the control algorithm in \citet{wan2020learning} is called differential $Q$-learning.
In the main text of \citet{wan2020learning}, both algorithms are presented with a global clock in the learning rates.
However,
in the convergence analysis in the appendix, both algorithms are shown to be convergent only with a local clock in the learning rates.
So the open problem is, do these algorithms also converge with a global clock in the learning rates?
\citet{blaser2026almost} partially answer this question by showing that when $\eta$ is sufficiently small, differential TD is convergent even with a global clock.
Here $\eta$ is a hyperparameter in differential TD that we will introduce later.
However, there is still a gap because \citet{wan2020learning} show that with a local clock, differential TD is convergent for any $\eta > 0$.
Does differential TD also converge with a global clock for any $\eta > 0$?
This is an open problem from \citet{blaser2026almost}.

One may optimistically conjecture that the answer to the above question is yes.
After all in the discounted RL setting, we have seen that algorithms that are convergent with a local clock are also convergent with a global clock.
We are not aware of any counterexample in the literature.
The key contribution of this work is to show that this nice correspondence breaks down in average-reward RL.
Particularly, we construct a counterexample showing that differential TD can diverge with a global clock.
This negative result is surprising and interesting because it shows that the choice of learning rates can have a much more significant impact on the convergence of RL algorithms in the average-reward setting than in the discounted setting.

\section{Background}

We consider an infinite horizon Markov Decision Process (MDP, \citet{bellman1957markovian}) defined by a tuple $(\fS, \fA, p, p_0, r)$, where $\fS$ is the finite state space, $\fA$ is the finite action space, $p: \fS \times \fS \times \fA \to [0, 1]$ is the transition function, $p_0: \fS \to [0, 1]$ is the initial state distribution, and $r: \fS \times \fA \to \R$ is the reward function.
At time step $0$, an initial state $S_0$ is drawn from $p_0$.
At each time step $t \geq 0$, 
an action $A_t$ is chosen according to a policy $\pi: \fA \times \fS \to [0, 1]$ as $A_t \sim \pi(\cdot | S_t)$.
Then a reward $R_{t+1} = r(S_t, A_t)$ is emitted and the next state $S_{t+1}$ is drawn from $p(\cdot | S_t, A_t)$.

In the discounted RL setting, we consider a discount factor $\gamma \in [0, 1)$.
The goal of policy evaluation is to estimate the value function for a given policy $\pi$, defined as $v_\pi(s) \doteq \E\qty[\sum_{i=0}^\infty \gamma^i R_{t+i+1} | S_t = s]$.
In the average-reward RL setting,
we consider the average reward $J_\pi \doteq \lim_{T \to \infty} \frac{1}{T} \E\qty[\sum_{t=0}^{T-1} R_{t+1}]$ and are interested in the differential value function of a policy $\pi$, 
defined as $\bar v_\pi(s) \doteq \E\qty[\sum_{i=0}^\infty (R_{t+i+1} - J_\pi) | S_t = s]$.
Under mild conditions, $J_\pi$ is independent of $S_0$.

Temporal difference (TD) learning is the most fundamental class of algorithms for policy evaluation in RL.
In this work,
we consider off-policy TD,
where the data is generated by a behavior policy $\mu$ (i.e., $A_t \sim \mu(\cdot | S_t)$) while we want to estimate the value function for a different target policy $\pi$.
In the discounted setting, the TD update is given by
\begin{align}
  \label{eq:discounted_td global}
  \delta_t =& R_{t+1} + \gamma v_t(S_{t+1}) - v_t(S_t), \\
  v_{t+1}(s) =& \begin{cases}
    v_t(s) + \alpha_t \rho_t \delta_t, & \text{if } s = S_t, \\
    v_t(s), & \text{otherwise}
  \end{cases},
\end{align}
where 
$\rho_t = \frac{\pi(A_t | S_t)}{\mu(A_t | S_t)}$ is the importance sampling ratio.
Here, $\alpha_t$ is the learning rate, e.g., $\alpha_t = \frac{1}{t+1}$ and we refer to~\eqref{eq:discounted_td global} as \emph{discounted TD with a global clock}.
Let $\nu(s, t) \doteq \sum_{i=0}^t \mathbb{I}_{\{S_i = s\}}$ be the number of times state $s$ has been visited by time $t$ with $\mathbb{I}$ being the indicator function.
If we replace $\alpha_t$ with $\alpha_{\nu(S_t, t)}$ in~\eqref{eq:discounted_td global}, we get \emph{discounted TD with a local clock}.
It is well-known that discounted TD with both clocks are convergent under standard assumptions (e.g., the Markov chain induced by $\mu$ is ergodic, $\pi$ is absolutely continuous w.r.t. $\mu$, and $\alpha_t$ satisfies the Robbins-Monro conditions).

\citet{wan2020learning} extend off-policy TD to the average-reward setting and propose the following update for differential TD:
\begin{align}
  \label{eq:differential_td global}
  \delta_t =& R_{t+1} - \hat J_t + v_t(S_{t+1}) - v_t(S_t), \\
  \hat J_{t+1} =& \hat J_t + \alpha_t \eta \rho_t \delta_t, \\
  v_{t+1}(s) =& \begin{cases}
    v_t(s) + \alpha_t \rho_t \delta_t, & \text{if } s = S_t, \\
    v_t(s), & \text{otherwise}
  \end{cases},
\end{align}
where $\eta > 0$ is a hyperparameter and $\qty{\hat J_t}$ is an estimate of the average reward $J_\pi$.
We refer to~\eqref{eq:differential_td global} as \emph{differential TD with a global clock}.
Similarly,
by replacing both appearances of $\alpha_t$ in~\eqref{eq:differential_td global} with $\alpha_{\nu(S_t, t)}$, we get \emph{differential TD with a local clock}.
\citet{wan2020learning} prove the almost sure convergence of differential TD with a local clock for any $\eta > 0$ in their appendix,
although their main text only presents differential TD with a global clock and does not mention the local clock at all.
\citet{blaser2026almost} show that when $\eta$ is sufficiently small, differential TD with a global clock is also convergent.
It is thus an open problem whether differential TD with a global clock is convergent for an $\eta$ that is not small enough.
Surprisingly, we will show in this paper that the answer is no.

We now introduce some definitions for matrix analysis.
\begin{definition}
  A matrix $A$ is called an $M$-matrix if $A$ can be expressed as $A = sI - B$, where $s > 0$, $B \geq 0$ (i.e., all entries of $B$ are nonnegative), and $s \geq \rho(B)$ with $\rho(B)$ being the spectral radius of $B$.
\end{definition}
\begin{definition}
  A matrix $A$ is called positive stable if all eigenvalues of $A$ have strictly positive real parts.
  A matrix $A$ is Hurwitz if $-A$ is positive stable.
\end{definition}

\section{How Is Differential TD Different from Discounted TD?}
We now provide some necessary background to illustrate the fundamental difference between how discounted and differential TD interact with global and local clocks. 
This is largely due to \citet{blaser2026almost} and we provide details here only for completeness and to set up the stage for our counterexample in the next section.

The ODE method is a powerful tool for establishing the almost sure convergence of RL algorithms.
Take discounted TD with a local clock as an example.
To apply the ODE method, we study the ODE$@\infty$ associated with discounted TD with a local clock, which is given by
\begin{align}
  \dv{x(t)}{t} =& -(I - \gamma P_\pi) x(t),
\end{align}
where $I$ is the identity matrix and $P_\pi$ is the transition matrix under policy $\pi$, i.e., $P_\pi(s, s') = \sum_{a} \pi(a|s) p(s'|s, a)$.
The standard ODE method (e.g., \citet{borkar2009stochastic,borkar2025ode,liu2025ode}) needs to verify that the above ODE is globally asymptotically stable (GAS) and 0 is its unique equilibrium point.
Fortunately, it is the case here because $I - \gamma P_\pi$ is a nonsingular $M$-matrix. It is well-known that a nonsingular $M$-matrix is positive stable \citep{Plemmons1977MmatrixCM}. 
The ODE$@\infty$ associated with discounted TD with a global clock is instead
\begin{align}
  \dv{x(t)}{t} =& -D_\mu (I - \gamma P_\pi) x(t),
\end{align}
where $D_\mu$ is a diagonal matrix with $D_\mu(s, s) = d_\mu(s)$ and $d_\mu$ is the stationary distribution of the Markov chain induced by $\mu$.
Suppose the Markov chain induced by $\mu$ is irreducible, then $d_\mu(s) > 0$ for all $s$.
So $D_\mu$ is also positive stable.
It's also known that the left multiplication of a nonsingular $M$-matrix by a positive stable diagonal matrix is still positive stable \citep{Fiedler1962,Plemmons1977MmatrixCM}, which implies that $D_\mu (I - \gamma P_\pi)$ is also positive stable and thus the ODE$@\infty$ associated with discounted TD with a global clock is also GAS.
The difference between the two ODEs is intuitive.
With a local clock, all the states are updated at the same ``magnitude'',
so the actual sampling distribution does not matter.
But with a global clock, the states are updated at different magnitudes according to how frequently they are visited.
So there is a $D_\mu$ term to capture the sampling distribution.

For differential TD with a local clock, the corresponding ODE$@\infty$ is 
\begin{align}
  \dv{x(t)}{t} = -(I - P_\pi + \eta ee^\top) x(t),
\end{align}
where $e$ is the all-one vector.
The detailed derivation can be found in \citet{wan2020learning}.
Although $(I - P_\pi + \eta ee^\top)$ is in general not an $M$-matrix,
it is still positive stable for all $\eta > 0$.
So the ODE is still GAS.
The ODE$@\infty$ associated with differential TD with a global clock is instead
\begin{align}
  \dv{x(t)}{t} = -D_\mu (I - P_\pi + \eta ee^\top) x(t).
\end{align}
See \citet{blaser2026almost} for the detailed derivation.
Unfortunately, the left multiplication of a positive stable matrix by a positive stable diagonal matrix does not necessarily preserve the positive stability.
So the ODE$@\infty$ associated with differential TD with a global clock is not automatically GAS for all $\eta > 0$.
To summarize, the fundamental difference is that \textbf{discounted TD has an $M$-matrix but differential TD has only a positive stable matrix}.

\citet{blaser2026almost} approach this challenge via a rank-one perturbation perspective.
Let 
\begin{align}
  A_\eta \doteq D_\mu (I - P_\pi + \eta ee^\top) = D_\mu (I - P_\pi) + \eta d_\mu e^\top.
\end{align}
Then $A_\eta$ can be viewed as a rank-one perturbation of a singular M-matrix because $D_\mu (I - P_\pi)$ is a singular M-matrix (see \citet{blaser2026almost} for a simple proof) and $\eta d_\mu e^\top$ is a rank-one matrix.
With results from the rank-one perturbation community \citep{bierkens2014singular}, \citet{blaser2026almost} show that when $\eta$ is sufficiently small, $A_\eta$ is positive stable  and thus differential TD with a global clock is convergent.

\section{Counterexamples}
In this section,
we provide a full characterization about when $A_\eta$ is positive stable,
based on which we construct concrete counterexamples showing that $A_\eta$ is not positive stable for some $\eta > 0$.
The following assumption is standard and is used in all the rest of the paper.
\begin{assumption}
The target transition matrix $P_\pi$ is irreducible and $ d_\mu(s) > 0$ for all $s$.
\end{assumption}
We follow the rank-one perturbation perspective in \citet{blaser2026almost} and first study
\begin{align}
  L \doteq D_\mu(I - P_\pi).
\end{align}
We use $\operatorname{spec}(L) \doteq \{z\in\mathbb C: \det(zI-L)=0\}$ to denote the spectrum of $L$, i.e., the set of its eigenvalues.
Our first observation is 
that every eigenvalue of $L$ has nonnegative real part and every nonzero eigenvalue has strictly positive real part.
\begin{lemma}
  \label{lemma:spectrum}
  For any $\lambda \in \operatorname{spec}(L)$, $\Re(\lambda) \geq 0$. If $\lambda \neq 0$, then $\Re(\lambda) > 0$.
\end{lemma}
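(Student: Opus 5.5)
We will use a Gershgorin-type argument in a suitably weighted form. The direct Gershgorin bound on $L = D_\mu(I-P_\pi)$ already gives the first claim: row $s$ has diagonal entry $d_\mu(s)(1-P_\pi(s,s))$, and its off-diagonal absolute row sum is $d_\mu(s)\sum_{s'\neq s}P_\pi(s,s') = d_\mu(s)(1-P_\pi(s,s))$. Every Gershgorin disc is therefore the disc centered at the real number $c_s \doteq d_\mu(s)(1-P_\pi(s,s)) \ge 0$ with radius $c_s$. Each such disc lies in the closed right half-plane and touches the imaginary axis only at the origin. Hence every eigenvalue $\lambda$ satisfies $\Re(\lambda)\ge 0$.

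For the second claim, suppose $\lambda \in \operatorname{spec}(L)$ with $\Re(\lambda)=0$. Since $\lambda$ lies in some disc $\{z: |z-c_s|\le c_s\}$, and the only point of that disc with zero real part is $0$, we get $\lambda=0$. This is exactly the contrapositive of the claim: if $\lambda\neq 0$ then $\Re(\lambda)>0$. (A disc with $c_s=0$ is just $\{0\}$, which is consistent with this.)

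The argument is essentially complete after the Gershgorin observation. The only step that needs care is verifying that a closed disc of center $c\ge 0$ and radius $c$ meets $\{\Re z = 0\}$ only at $0$. Writing $z = iy$, the condition $|iy - c|^2 = y^2 + c^2 \le c^2$ forces $y=0$. The irreducibility in the Assumption is not needed for this lemma. Positivity of $d_\mu$ is used only to ensure the scaling keeps $c_s\ge0$; nonnegativity would suffice.

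As an alternative, if one wants to avoid Gershgorin, one can take an eigenvector $x$ with $Lx=\lambda x$ and pick $s$ maximizing $|x(s)|$. Then $(\lambda/d_\mu(s) - 1)x(s) = -\sum_{s'}P_\pi(s,s')x(s')$, so $|\lambda/d_\mu(s)-1|\le 1$. This is the same disc statement and gives the same conclusion.
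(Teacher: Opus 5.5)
Your proof is correct and is essentially the paper's argument. The paper takes a maximal-modulus entry of an eigenvector to obtain $|1-\lambda/d_\mu(k)|\le 1$ (your alternative paragraph reproduces this verbatim), which is the same disc bound you get from Gershgorin, and then concludes because the disc touches the imaginary axis only at the origin; your Gershgorin discs, centered at $d_\mu(s)(1-P_\pi(s,s))$, are marginally tighter and need only $d_\mu\ge 0$, but this makes no difference for the lemma.
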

\begin{proof}
  Let $\lambda\in\operatorname{spec}(L)$ and let $x\neq0$ satisfy $Lx=\lambda x$.  Choose $k$ such that $|x_k|=\max_j |x_j|$.  The $k$-th row of $Lx=\lambda x$ gives $\textstyle d_\mu(k)\left(x_k-\sum_j P_\pi(k,j)x_j\right)=\lambda x_k,$ so $\textstyle \sum_j P_\pi(k,j)x_j = \left(1-\frac{\lambda}{d_\mu(k)}\right)x_k.$ Since $P_\pi$ is row-stochastic, $$\textstyle \left|1-\frac{\lambda}{d_\mu(k)}\right| |x_k| \leq \sum_jP_\pi(k,j)|x_j| \leq |x_k|.$$ Thus $|1-\lambda/d_\mu(k)|\leq1$.  Writing $\lambda=a+ib$ yields $\left(1-\frac{a}{d_\mu(k)}\right)^2 + \left(\frac{b}{d_\mu(k)}\right)^2 \leq 1.$
  By rearranging terms, we have $2d_\mu(k)\operatorname{Re}(\lambda)\geq |\lambda|^2,$
  which completes the proof.
\end{proof}
Lemma~\ref{lemma:spectrum} clarifies that \emph{0 is the only eigenvalue of $L$ on the imaginary axis and all the other eigenvalues of $L$ are in the open right half plane}.
Now, define a function $g: \mathbb{C} \to \mathbb{C}$ as
\begin{align}
  g(z) \doteq e^\top (zI - L)^{-1} d_\mu.
\end{align}
It is easy to see that $g(z)$ is well-defined for every $z \notin \operatorname{spec}(L)$.
Particularly, let $i$ be the imaginary unit. Then for every $\omega > 0$, $i \omega \notin \operatorname{spec}(L)$ thanks to Lemma~\ref{lemma:spectrum}. 
So $g(i \omega)$ is well-defined.
This allows us to define
\begin{align}
  \eta_* \doteq \inf \qty{\eta > 0 \mid \exists \omega > 0 \text{ such that } 1 - \eta g(i\omega) = 0}.
\end{align}
with the convention that $\inf \emptyset = +\infty$. 
This $\eta_*$ is the threshold we want.
\begin{theorem}[Maximal Stability Threshold]
  \label{thm:maximal_stability_threshold}
  For every $\eta \in (0, \eta_*)$, $A_\eta$ is positive stable.
  Furthermore, the above statement does not hold for any $\eta_*' > \eta_*$, i.e., for every $\eta_*' > \eta_*$, there exists some $\eta_c \in [\eta_*, \eta_*')$ such that $A_{\eta_c}$ is not positive stable.
\end{theorem}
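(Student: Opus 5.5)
The plan is a continuity argument on the eigenvalues of $A_\eta = L + \eta d_\mu e^\top$ as $\eta$ increases from $0$, with eigenvalue crossings of the imaginary axis detected through the matrix determinant lemma. For $z\notin\operatorname{spec}(L)$,
\begin{align}
  \det(zI - A_\eta) = \det(zI-L)\,\bigl(1 - \eta\, e^\top (zI-L)^{-1} d_\mu\bigr) = \det(zI-L)\,(1-\eta g(z)).
\end{align}
So a point $z\notin\operatorname{spec}(L)$ is an eigenvalue of $A_\eta$ if and only if $1-\eta g(z)=0$. By Lemma~\ref{lemma:spectrum}, the only point of the imaginary axis lying in $\operatorname{spec}(L)$ is $0$. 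Hence for $\omega\neq 0$, the point $i\omega$ is an eigenvalue of $A_\eta$ exactly when $1-\eta g(i\omega)=0$.

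Because $A_\eta$ is real, its spectrum is closed under conjugation, so negative $\omega$ give nothing beyond what positive $\omega$ give. It then remains to handle $z=0$.
\begin{itemize}
  \item I claim $0\notin\operatorname{spec}(A_\eta)$ for all $\eta>0$. Suppose $A_\eta x=0$. Left-multiplying by $e^\top$ and using $e^\top D_\mu(I-P_\pi) = d_\mu^\top (I-P_\pi)$ would be clean only if $d_\mu$ were stationary for $P_\pi$, which it is not in general.
  \item A cleaner route is to factor: $A_\eta = D_\mu(I-P_\pi+\eta ee^\top)$, and $I-P_\pi+\eta ee^\top$ is positive stable by the result quoted from \citet{wan2020learning}, hence nonsingular. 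Since $D_\mu$ is nonsingular, $\det A_\eta\neq0$.
\end{itemize}
Consequently, for every $\eta\in(0,\eta_*)$, $A_\eta$ has no eigenvalue on the imaginary axis.

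Next I establish positive stability for small $\eta$, which I will do directly rather than citing \citet{blaser2026almost}. At $\eta=0$ the spectrum of $L$ is $\{0\}$ together with eigenvalues in the open right half plane.
\begin{itemize}
  \item First, $0$ is a simple eigenvalue of $L$. Its kernel is spanned by $e$ because $P_\pi$ is irreducible. For the left null vector, $y^\top D_\mu(I-P_\pi)=0$ gives $y=D_\mu^{-1}d_\pi$, where $d_\pi$ is the stationary distribution of $P_\pi$. Then $y^\top e=\sum_s d_\pi(s)/d_\mu(s)\neq0$, which rules out a Jordan block, so the eigenvalue is simple.
  \item First-order perturbation theory then moves this eigenvalue to
  \begin{align}
    \lambda(\eta)=\eta\,\frac{y^\top d_\mu e^\top e}{y^\top e}+O(\eta^2) = \eta\,\frac{n}{\sum_s d_\pi(s)/d_\mu(s)}+O(\eta^2),
  \end{align}
  where $y^\top d_\mu = e^\top d_\pi = 1$ and $n=|\fS|$. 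This is real and positive, so the eigenvalue enters the open right half plane.
  \item The remaining eigenvalues lie in the open right half plane at $\eta=0$. By continuity of the spectrum in $\eta$, they stay there for small $\eta$.
\end{itemize}
So $A_\eta$ is positive stable for all sufficiently small $\eta>0$.

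To extend this to all of $(0,\eta_*)$, I use that eigenvalues depend continuously on $\eta$, as roots of a polynomial whose coefficients are continuous in $\eta$. On the connected interval $(0,\eta_*)$, an eigenvalue could leave the open right half plane only by passing through the imaginary axis, and the previous paragraph shows that never happens. This proves the first claim.

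For the sharpness claim, fix $\eta_*'>\eta_*$. Since $\eta_*<\infty$, the definition of infimum gives some $\eta_c\in[\eta_*,\eta_*')$ and some $\omega>0$ with $1-\eta_c g(i\omega)=0$. By the determinant identity, $i\omega$ is then an eigenvalue of $A_{\eta_c}$, so $A_{\eta_c}$ is not positive stable.

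I expect the main obstacle to be the small-$\eta$ base case, specifically showing that $0$ is a simple eigenvalue of $L$ and that the perturbation pushes it rightward rather than leftward. The simplicity argument through $y=D_\mu^{-1}d_\pi$ is the key computation there. The continuity argument also has to be stated carefully, with eigenvalues counted with multiplicity.
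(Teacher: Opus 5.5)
Your proposal is correct and follows essentially the same route as the paper: small-$\eta$ stability from a first-order perturbation of the simple zero eigenvalue of $L$ (the same computation as in Appendix~\ref{app:small_eta_stability}), exclusion of a zero eigenvalue for $\eta>0$, the matrix determinant lemma to characterize crossings at $i\omega$, and continuity of the spectrum; your explicit infimum argument for the sharpness part spells out what the paper compresses into one line. The only local difference is how you exclude $0\in\operatorname{spec}(A_\eta)$: you use the factorization $A_\eta=D_\mu(I-P_\pi+\eta ee^\top)$ together with the quoted positive stability of the second factor, whereas the paper applies the left vector $\ell=D_\mu^{-1}d_\pi$ (your $y$) to get $\ell^\top A_\eta x=\eta\, e^\top x$, which keeps the argument self-contained, and you should drop the abandoned first bullet.
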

\begin{proof}
  We view $A_\eta$ as a rank-one perturbation of $L$ and study how the eigenvalues of $A_\eta$ move as $\eta$ increases from $0$.
  Since eigenvalues depend continuously on the entries of the matrix, they also depend continuously on $\eta$.
  By Lemma~4.9 of \citet{blaser2026almost}, there exists $\eta_0>0$ such that $A_\eta$ is positive stable for every $\eta\in(0,\eta_0]$.\footnote{Lemma~4.9 of \citet{blaser2026almost} proves this using Lemma~2.11 of \citet{bierkens2014singular}. For completeness, Appendix~\ref{app:small_eta_stability} gives a self-contained perturbation proof for our setting.}
  In other words, all the eigenvalues of $A_\eta$ are in the open right half plane for all sufficiently small $\eta > 0$.
  Thus for $A_\eta$ to lose positive stability when $\eta$ increases, at least one eigenvalue must cross the imaginary axis for some $\eta > 0$.

  First, we will show that it cannot cross the imaginary axis through the origin. 
  This is because $A_\eta$ has no zero eigenvalue for any $\eta > 0$.
  To see this, let $d_\pi$ be the stationary distribution of $P_\pi$ and define $\ell^\top \doteq d_\pi^\top D_\mu^{-1}$.
  If $A_\eta x = 0$ for some $x \neq 0$ and $\eta > 0$, 
  then
  \begin{align}
    0 = \ell^\top A_\eta x = \ell^\top D_\mu (I - P_\pi) x + \eta \ell^\top d_\mu e^\top x = \eta e^\top x.
  \end{align}
  Then 
    $0 = A_\eta x =Lx + \eta d_\mu e^\top x = Lx$.
  So $x$ is in the kernel of $L$.
  Since $D_\mu$ is full rank,
  the kernel of $L$ is the same as the kernel of $I - P_\pi$.
  But by the Perron-Frobenius theorem, the kernel of $I - P_\pi$ is one-dimensional and is spanned by the all-one vector $e$.
  So there must exist $c \neq 0$ such that $x = ce$,
  yielding $0 = e^\top x = ce^\top e \neq 0$, which is a contradiction.

  Now we characterize every possible non-origin imaginary axis crossing.
  Notice that $A_\eta$ is a real matrix, so nonreal eigenvalues of $A_\eta$ come in conjugate pairs.
  It is thus sufficient to characterize every possible crossing at $i \omega$ with $\omega > 0$.
  We thus study when $i \omega$ is an eigenvalue of $A_\eta$ for some $\omega > 0$ and $\eta > 0$, i.e., when $\det(i \omega I - A_\eta) = 0$.
  By the rank-one determinant identity $\det(M+uv^\top)=\det(M)+v^\top\operatorname{adj}(M)u$ with $\operatorname{adj}(M)$ being the adjugate of $M$,
  we have
  \begin{align}
    0 = \det(i\omega I-A_\eta) =& \det(i\omega I-L-\eta d_\mu e^\top) = \det(i\omega I-L)-\eta e^\top\operatorname{adj}(i\omega I-L)d_\mu.
  \end{align}
  Since $i\omega \notin \operatorname{spec}(L)$, we have $\operatorname{adj}(i\omega I-L)=\det(i\omega I-L)(i\omega I-L)^{-1}$, so
  \begin{align}
    0 =& \det(i\omega I-L)-\eta e^\top\operatorname{adj}(i\omega I-L)d_\mu = \det(i\omega I-L)(1-\eta g(i\omega)).
  \end{align}
  Again since $i\omega \notin \operatorname{spec}(L)$, 
  we have $\det(i\omega I-L)\neq0$.
  So the above is equivalent to $1-\eta g(i\omega)=0$.
  By defintion, $\eta_*$ is the infimum of such $\eta$,
  which completes the proof.
\end{proof}

Theorem~\ref{thm:maximal_stability_threshold} gives the largest $\eta_*$ such that the claim ``$A_\eta$ is positive stable for every $\eta \in (0, \eta_*)$'' holds.
In the following, we show that the stability region can be more complicated than just $(0, \eta_*)$. 
\begin{theorem}[Exact Stability Region]
  \label{thm:exact_stability_region}
  The stability region 
  $$\Pi \doteq \qty{\eta > 0 \mid A_\eta \text{ is positive stable}}$$
  is always a finite union of open intervals, possibly including unbounded intervals.
\end{theorem}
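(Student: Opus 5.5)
The plan is to turn positive stability of $A_\eta$ into finitely many polynomial inequalities in $\eta$ via the Routh--Hurwitz criterion; the sign pattern of finitely many real polynomials on $(0,\infty)$ then forces $\Pi$ to be a finite union of open intervals.

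First, $A_\eta$ is positive stable iff $-A_\eta$ is Hurwitz, i.e., iff every root of
$$q_\eta(z) \doteq \det(zI + A_\eta) = z^n + a_1(\eta) z^{n-1} + \cdots + a_n(\eta)$$
lies in the open left half plane, where $n = |\fS|$. Since $A_\eta = L + \eta d_\mu e^\top$ and the rank-one determinant identity from the proof of Theorem~\ref{thm:maximal_stability_threshold} gives $\det(zI + L + \eta d_\mu e^\top) = \det(zI+L) + \eta\, e^\top \operatorname{adj}(zI+L) d_\mu$, each coefficient $a_k(\eta)$ is an affine function of $\eta$ with real coefficients. I will only use that each $a_k$ is a real polynomial in $\eta$.

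Second, I invoke the Routh--Hurwitz criterion. A real monic polynomial of degree $n$ has all roots in the open left half plane iff all $n$ leading principal minors $\Delta_1,\dots,\Delta_n$ of its Hurwitz matrix are strictly positive. Each $\Delta_k$ is a determinant whose entries are coefficients $a_j(\eta)$ or zero, so $\Delta_k(\eta)$ is a real polynomial in $\eta$. Hence
$$\Pi = (0,\infty) \cap \bigcap_{k=1}^n \{\eta \in \R : \Delta_k(\eta) > 0\}.$$

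Third, I describe each set $\{\Delta_k > 0\}$.
\begin{itemize}
\item If $\Delta_k$ is a nonzero polynomial, it has finitely many real roots, and it has constant sign on each open interval between consecutive roots. So $\{\Delta_k>0\}$ is a finite union of open intervals, the outer two possibly unbounded.
\item If $\Delta_k \equiv 0$, the set is empty, which is the empty union.
\end{itemize}
The intersection of two finite unions of open intervals is again a finite union of open intervals, because the intersection of two open intervals is an open interval or empty. By induction over $k$, and after intersecting with $(0,\infty)$, $\Pi$ is a finite union of open intervals, possibly unbounded. Lemma~4.9 of \citet{blaser2026almost} shows $\Pi$ contains some interval $(0,\eta_0]$, so $\Pi$ is in fact nonempty.

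The step that needs care is the exact form of the Routh--Hurwitz criterion. It must be the version in which strict positivity of all Hurwitz minors is equivalent to all roots lying in the \emph{open} left half plane, with no degenerate boundary cases. For a monic real polynomial this is the standard statement. As a fallback that avoids the Hurwitz determinants, I would use the Tarski--Seidenberg theorem: $\Pi$ is defined by a first-order formula over the reals, namely that for all $(x,y)\in\R^2$ with $x \le 0$, $\det((x+iy)I - A_\eta) \neq 0$, written as real and imaginary parts. Hence $\Pi$ is semialgebraic in $\R$, i.e., a finite union of points and intervals. $\Pi$ is open, since eigenvalues depend continuously on $\eta$ and the open right half plane is open. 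An open semialgebraic subset of $\R$ is a finite union of open intervals.
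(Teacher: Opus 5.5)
Your proposal is correct and follows essentially the same route as the paper: Routh--Hurwitz applied to $\det(zI+A_\eta)$, coefficients affine in $\eta$ via the rank-one determinant identity, Hurwitz determinants $\Delta_k(\eta)$ as real polynomials in $\eta$, and a sign analysis on the intervals between their finitely many real roots. Two of your additions go beyond the paper's proof. Invoking the small-$\eta$ lemma correctly shows that $\Pi$ is nonempty, which rules out the $\Delta_k \equiv 0$ case the paper leaves open. The Tarski--Seidenberg fallback is also valid, but the argument does not need it.
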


\begin{proof}
To characterize $\Pi$, we will use the Routh-Hurwitz criterion \citep{gantmacher1980theory}.
To this end, we consider the characteristic polynomial of $-A_\eta$ defined as
\begin{align}
  q_\eta(z) \doteq \det(zI + A_\eta) = z^m + a_1(\eta) z^{m-1} + \cdots + a_m(\eta),
\end{align}
where $m = |\fS|$ is the number of states and $\qty{a_j(\eta)}_{j=1}^m$ are the coefficients of the characteristic polynomial.
The roots of $q_\eta(z)=\det(zI+A_\eta)$ are exactly $z=-\lambda$, where $\lambda\in\operatorname{spec}(A_\eta)$. 
Therefore $A_\eta$ is positive stable if and only if every root of $q_\eta$ has negative real part, i.e., if and only if $q_\eta$ is a Hurwitz polynomial. 
By the Routh-Hurwitz criterion, a real monic polynomial is Hurwitz if and only if all of its Hurwitz determinants are positive. 
Using the convention that $a_0(\eta) \doteq 1$ and $a_j(\eta) \doteq 0$ whenever $j < 0$ or $j > m$, 
for $k = 1, \dots, m$,
the $k$-th Hurwitz determinant of the polynomial $q_\eta$ is defined as
\begin{align}
  \Delta_k(\eta) \doteq \det\left(\left[a_{2j-i}(\eta)\right]_{i,j=1}^k\right).
\end{align}
Thus $A_\eta$ is positive stable if and only if $\Delta_k(\eta)>0$ for every $k=1,\ldots,m$.
Since $A_\eta=L+\eta d_\mu e^\top$, we have
\begin{align}
  q_\eta(z) = \det(zI+L+\eta d_\mu e^\top) = \det(zI+L) + \eta e^\top \operatorname{adj}(zI+L) d_\mu.
\end{align}
Hence $q_\eta(z)=q_0(z)+\eta q_1(z)$, where $q_0$ and $q_1$ are polynomials in $z$. Consequently, each coefficient $a_j(\eta)$ must be linear in $\eta$.
Since $\Delta_k(\eta)$ is the determinant of a $k\times k$ matrix whose entries are linear in $\eta$, it is a polynomial in $\eta$ of degree at most $k$. 
If some $\Delta_k(\eta)$ is zero for all $\eta > 0$, then no $\eta$ can satisfy $\Delta_k(\eta)>0$, so the stability region is empty. 
Otherwise, the finitely many real roots of the nonzero polynomials $\Delta_1,\ldots,\Delta_m$ partition $(0,\infty)$ into finitely many intervals, and the sign of each $\Delta_k$ is constant on each such interval. Selecting exactly those intervals on which all determinants are positive gives $\Pi$. 
Thus the stability region is a finite union of open intervals, possibly empty and possibly including unbounded intervals.
\end{proof}

We note that Theorems~\ref{thm:maximal_stability_threshold} \&~\ref{thm:exact_stability_region} do not rule out the possibility that $\Pi$ is $(0, \infty)$.
Rather, they provide a characterization of how $\Pi$ may look like.
Building on these insights, we will now construct a concrete counterexample showing that for some $\eta > 0$, $A_\eta$ is not positive stable.

\begin{example}[Counterexample]
  \label{example:counterexample}
  Suppose $m>22$ is an integer. 
  Then there exists a positive diagonal matrix $D_\mu \in \R^{(m + 2)\times (m + 2)}$ and an irreducible and aperiodic row-stochastic matrix $P_\pi$ such that
  $A_\eta$ is positive stable if and only if
  $$\eta\in(0,\alpha)\cup(3\alpha,\infty), \qquad
    \alpha\doteq \frac{m-22}{m^2+m-2}.$$
  The smallest instance in this family is obtained at $m=23$, yielding $\alpha=1/550$ and a $25$-state system.
\end{example}

\begin{proof}
  Consider $m+2$ states $a_1,\ldots,a_m,b,c$. 
  Set $\alpha\doteq(m-22)/(m^2+m-2)$.
  Construct a vector $d_\mu \in \R^{m+2}$ as $d_\mu[a_i]=\alpha$ for $i=1,\ldots,m$, $d_\mu[b]=\alpha$, and $d_\mu[c]=1-(m+1)\alpha$, and let $D_\mu \doteq \operatorname{diag}(d_\mu)$. 
  Since $m>22$, we have $\alpha>0$. Moreover,
  $$d_\mu[c]=\frac{22m+20}{m^2+m-2}>0, \qquad
    d_\mu[c]-\alpha=\frac{21m+42}{m^2+m-2}>0.$$
  Also, $\sum_s d_\mu[s]=m\alpha+\alpha+d_\mu[c]=1$. Thus $d_\mu$ is a strictly positive probability vector, and $\alpha\le d_\mu[s]$ for every state $s$.

  Define the row-stochastic matrix $Q$ by
  $$Q[a_i, b]=1\ (i=1,\ldots,m), \qquad Q[b, c]=1, \qquad Q[c, a_i]=\frac1m\ (i=1,\ldots,m),$$
  with all other entries zero.
  Define $P_\pi\doteq I+\alpha D_\mu^{-1}(Q-I)$. Then, for each state $s$, $$\textstyle P_\pi[s,\cdot]=\left(1-\frac{\alpha}{d_\mu[s]}\right)e_s^\top+\frac{\alpha}{d_\mu[s]}Q[s,\cdot],$$ where $e_s$ is the standard basis vector at state $s$, not the all-one vector $e$. 
  Therefore every row of $P_\pi$ is a convex combination of $e_s^\top$ and $Q[s,\cdot]$, so $P_\pi$ is row-stochastic. It is irreducible because all transitions of $Q$ remain available, and it is aperiodic because $P_\pi[c,c]=1-\alpha/d_\mu[c]>0$.

  For $\eta>0$, write $t\doteq\eta/\alpha$. Then $$A_\eta = D_\mu(I-P_\pi+\eta ee^\top)=\alpha(I-Q+td_\mu e^\top),$$ since $D_\mu(I-P_\pi)=\alpha(I-Q)$ and $D_\mu\eta ee^\top=\eta d_\mu e^\top=\alpha td_\mu e^\top$. 
  Since $\alpha>0$, multiplication by $\alpha$ does not change positive stability. 
  So it is sufficient to analyze $B_t\doteq I-Q+td_\mu e^\top$.

  We next decompose the ambient vector space. Work over $\mathbb C$ and order the coordinates as $(a_1,\ldots,a_m,b,c)$. Let
  $$\mathcal H\doteq\{x \in \mathbb C^{m+2}:x[b]=x[c]=0,\ \sum_{i=1}^m x[a_i]=0\}.$$
  If $x\in\mathcal H$, then $Qx=0$ and $e^\top x=0$, so $B_tx=x$. 
  Since $\dim(\mathcal H)=m-1$, the restriction $B_t|_{\mathcal H}$ is the identity map on $\mathcal H$, so this block contributes $m-1$ eigenvalues equal to $1$.
  Now consider the symmetric subspace
  $$\mathcal{U} \doteq \qty{x \in \mathbb C^{m+2}:x[a_1]=\cdots=x[a_m]}.$$
  Define the linear isomorphism
  $$\Phi:\mathbb C^3\to \mathcal U,\qquad \Phi(u,v,w)=(u,\ldots,u,v,w).$$
  Let $y=(u,v,w)^\top$. Then
  $Q\Phi y=\Phi(Cy),\qquad e^\top\Phi y=r^\top y,$
  where
  $$C=\begin{pmatrix}0&1&0\\0&0&1\\1&0&0\end{pmatrix},\qquad r^\top=\begin{pmatrix}m&1&1\end{pmatrix}.$$
  Moreover,
  $$d_\mu e^\top\Phi y=\Phi(\bar d r^\top y),\qquad \bar d=\begin{pmatrix}\alpha\\ \alpha\\ 1-(m+1)\alpha\end{pmatrix}.$$
  Hence
  $(Q-td_\mu e^\top)\Phi y=\Phi\qty((C-t\bar d r^\top)y).$
  In particular, $\mathcal U$ is invariant under $Q-td_\mu e^\top$, and therefore also under $B_t=I-Q+td_\mu e^\top$. The matrix of the restricted map $(Q-td_\mu e^\top)|_{\mathcal U}$ in the coordinates given by $\Phi$ is
  $M_t\doteq C-t\bar d r^\top.$
  Consequently, the restriction of $B_t$ to $\mathcal U$ is represented by $I-M_t$.
  Finally, $\mathbb C^{m+2}=\mathcal H\oplus\mathcal U$. Indeed, for any $x\in\mathbb C^{m+2}$, let $\bar u\doteq m^{-1}\sum_{i=1}^m x[a_i]$. Then
  $$x=\underbrace{(x[a_1]-\bar u,\ldots,x[a_m]-\bar u,0,0)}_{\in\mathcal H}+\underbrace{(\bar u,\ldots,\bar u,x[b],x[c])}_{\in\mathcal U},$$
  and $\mathcal H\cap\mathcal U=\{0\}$. Since both subspaces are $B_t$-invariant, the matrix of $B_t$ in a basis adapted to this direct sum is block diagonal:
  $$B_t\sim\begin{pmatrix}I_{m-1}&0\\0&I-M_t\end{pmatrix}.$$
  Thus the remaining three eigenvalues of $B_t$ are exactly the eigenvalues of $I-M_t$.
  Now compute the characteristic polynomial of $M_t$. Let $N=zI-C$. Then $\det N=z^3-1$ and
  $$\operatorname{adj}(N)=
    \begin{pmatrix}
      z^2&z&1\\
      1&z^2&z\\
      z&1&z^2
    \end{pmatrix}.$$
  By the matrix determinant lemma,
  $$\det(zI-M_t)=\det(N+t\bar d r^\top) =\det N+t r^\top\operatorname{adj}(N)\bar d.$$
  $$r^\top\operatorname{adj}(N)\bar d  =z^2+z+\left[m-\alpha(m^2+m-2)\right] =z^2+z+22.$$
  Therefore $\det(zI-M_t)=z^3-1+t(z^2+z+22)$. The Hurwitz polynomial of $I-M_t$ is
  $$q_t(s)=\det(sI+I-M_t)=s^3+(t+3)s^2+(3t+3)s+24t.$$

  For a cubic $s^3+as^2+bs+c$, Hurwitz stability is equivalent to $a>0$, $b>0$, $c>0$, and $ab>c$. Here $a,b,c>0$ for $t>0$, and the remaining condition is   $$(t+3)(3t+3)>24t \quad\Longleftrightarrow\quad 3(t-1)(t-3)>0.$$
  Thus the three-dimensional block is positive stable if and only if $t\in(0,1)\cup(3,\infty)$. Since the other $m-1$ eigenvalues of $B_t$ are equal to $1$, we conclude that $$A_\eta\text{ is positive stable} \quad\Longleftrightarrow\quad \eta\in(0,\alpha)\cup(3\alpha,\infty).$$
\end{proof}

\section{Experiments}
We first provide a numerical validation of Example~\ref{example:counterexample}.
Figure~\ref{fig:eigenvalue-phase-portrait} shows the nontrivial eigenvalues of $A_\eta/\alpha$ as $\eta$ increases from $0$ to $4\alpha$. The conjugate pair starts in the open right half plane, crosses the imaginary axis at $\eta=\alpha$, remains in the open left half plane for $\eta\in(\alpha,3\alpha)$, and crosses back into the open right half plane at $\eta=3\alpha$.
\begin{figure}[h]
  \centering \includegraphics[width=0.95\linewidth]{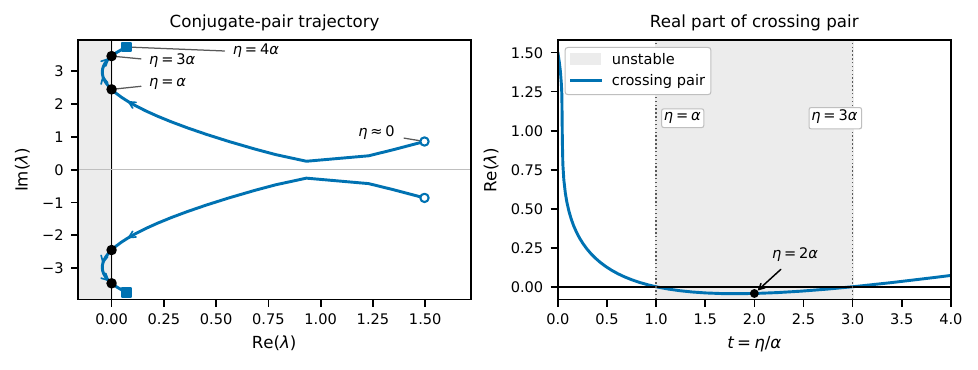}
  \caption{Phase portrait of the nontrivial eigenvalues of $A_\eta/\alpha$ for the counterexample in Example~\ref{example:counterexample}. Left: trajectory of the conjugate pair in the complex plane. Right: real part of this pair as a function of $t=\eta/\alpha$, with the unstable interval $t\in(1,3)$ shaded.}
  \label{fig:eigenvalue-phase-portrait}
\end{figure}

We then construct a concrete MDP instance based on Example~\ref{example:counterexample} and run the global-clock and local-clock differential TD algorithms on it with the same $\eta$. 
We consider a finite MDP with $m+2$ states and two actions $\qty{a_0, a_1}$, where $m>22$ is an integer.
The target policy $\pi$ always selects action $a_0$,
and the transition matrix of $a_0$ is defined to be $P_\pi$ in the proof of Example~\ref{example:counterexample}.
So the transition matrix of the target policy $\pi$ coincides with $P_\pi$.
The behavior policy $\mu$ selects action $a_0$ with probability $\kappa\in(0,1)$ and action $a_1$ otherwise, where $\kappa$ is chosen so that
$\kappa\le \min_{i,j:P_\pi[i,j]>0}\frac{d_\mu[j]}{P_\pi[i,j]},$
where both $P_\pi$ and $d_\mu$ are defined in the proof of Example~\ref{example:counterexample}.
Define the transition matrix of $a_1$ as $R\doteq(ed_\mu^\top-\kappa P_\pi)/(1-\kappa)$.
The matrix $R$ is a row stochastic matrix by the choice of $\kappa$ and the fact that $ed_\mu^\top$ and $P_\pi$ are row stochastic.
The transition matrix of the behavior policy is then $P_\mu=\kappa P_\pi+(1-\kappa)R=ed_\mu^\top.$
So $d_\mu$ coincides with the stationary distribution of the behavior policy. The reward is always 0. For the reported run, the initial state is fixed to be $a_1$.
We use the smallest instance, $m=23$, and choose $\eta=2\alpha=\frac1{275}$, which lies in the unstable interval $(\alpha,3\alpha)$ for the global clock.
We then run the algorithm~\eqref{eq:differential_td global} with both the global clock and the local clock in the learning rates, using the base sequence $\alpha_n=0.45/(10^4+n)^{0.6}$.
The global-clock run uses $n=t+1$, while the local-clock run uses $n=\nu(S_t,t)$. We choose $v_0$ as the real part of an eigenvector corresponding to the eigenvalue of $A_\eta$ with the smallest real part, normalized so that $\norm{v_0}_2=1$, and set $\hat J_0=\eta e^\top v_0$. 
With this particular choice of $\hat J_0$, the results in \citet{wan2020learning} show that differential TD with a local clock should converge to 0 for any $\eta > 0$.
We run the algorithm for $10^{11}$ time steps and plot both $\norm{v_t}_2$ and $\operatorname{dist}(v_t,\operatorname{span}\{e\})$. Since the reward is always 0, the latter is the distance from $v_t$ to $\{\bar v_\pi+ce:c\in\mathbb R\}$. The result in Figure~\ref{fig:local-global-stability} shows that the local-clock recursion contracts to 0 while the global-clock recursion grows, matching the exact stability calculation above.
\begin{figure}[t]
  \centering \includegraphics[width=0.95\linewidth]{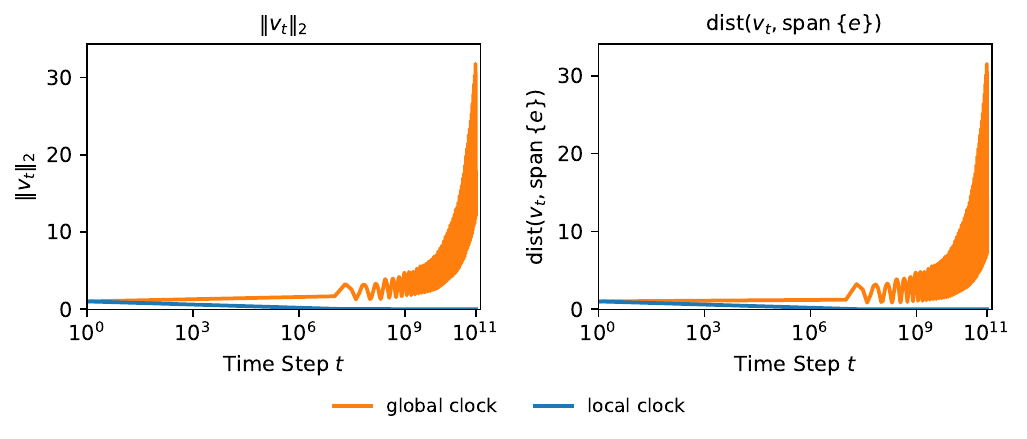}
  \caption{Differential TD with $\eta=2\alpha$. Left: $\norm{v_t}_2$. Right: distance from $v_t$ to $\operatorname{span}\{e\}$. The global-clock trajectory is oscillatory because the unstable mode is the conjugate pair shown in Figure~\ref{fig:eigenvalue-phase-portrait}. 
  The curve here is only a typical run with one seed, because we are interested in the sample path behavior. 
  We provide 9 more seeds in Appendix~\ref{app:additional_online_runs}, which show the same phenomenon.}
  \label{fig:local-global-stability}
\end{figure}

\section{Related Work}
The closest to our work is \citet{chen2025non}, 
which also shows that an average-reward algorithm converges with a local clock but can diverge with a global clock.
However, there are several differences between this work and \citet{chen2025non}.
First, \citet{chen2025non} considers only the learning rate of $\alpha_t = \fO(t^{-1})$.
By using a local clock in this form of learning rate, \citet{chen2025non} implicitly approximates the state-action visitation distribution and thus implicitly applies an importance sampling.
Their insights and analyses do not apply to learning rates that do not have this form, such as $\alpha_t = \fO(t^{-\beta})$ for some $\beta \in (1/2,1)$.
By contrast,
our analysis applies to any learning rate that satisfies the standard Robbins-Monro conditions,
including those with $\beta \in (1/2, 1)$.
Second, \citet{chen2025non} focuses on running standard discounted $Q$-learning with $\gamma = 1$ in the average-reward setting directly.
With the local clock implicitly applying importance sampling,
their algorithm converges to the \emph{set} of optimal action-value functions,
i.e.,
the $q$ value estimates themselves are not guaranteed to converge to a unique limit.
So \citet{chen2025non} has to introduce a new centered version for convergence to a unique limit. 
By contrast, the differential TD algorithm we analyze is designed for the average-reward setting and implicitly applies the reference state technique from RVI $Q$-learning, which directly guarantees convergence to a unique limit.
Nevertheless,
one may ask,
can the insights and techniques of \citet{chen2025non} be applied towards understanding the open problems in \citet{wan2020learning,blaser2026almost}?
Our evaluation is negative.
At most, \citet{chen2025non} can provide an alternative convergence proof for differential TD with a local clock and $\alpha_t = \fO(t^{-1})$. 
But combining the results from \citet{blaser2026almost} and this work,
it is easy to see that the effect of the global clock interacts with the hyperparameter $\eta$. 
But \citet{chen2025non} does not have the mechanism to make an evaluation based on $\eta$.

This work studies the divergence of differential TD through the lens of the instability of the associated ODE.
It is important to note that 
there are works showing that even if the ODE is not GAS, an algorithm can still converge in a weaker sense, e.g., convergence to a set of solutions \citep{xie2025finite,wang2024almost}, convergence to a bounded set \citep{meyn2024projected,liu2025linearq,liu2025extensions}, or convergence to a sample path dependent limit \citep{blaser2026asymptotic}.
However, as demonstrated by our experiments, the instability of the ODE we encounter does not appear benign.
That being said, proving that differntial TD with a global clock can diverge almost surely is still an open problem.

\section{Conclusion}
This work shows that the choice of local and global clocks can affect the stability of average-reward RL algorithms in an arguably surprising way: for some $\eta > 0$, the differential TD algorithm can be stable with a local clock but unstable with a global clock,
a phenomenon that has never been reported in the discounted setting.
Our result shows that, in average-reward RL, replacing local clocks by the practically standard global clock is not a benign implementation detail.
A theorem proved with local clocks may be misleading for the algorithm people actually implement.
This work also makes contribution to the rank-one perturbation theory \citep{moro2003low, savchenko2004change, ding2007eigenvalues, mehl2011eigenvalue, ran2012eigenvalues, fourie2013rank, mehl2014eigenvalue, ran2021global}.
Particularly, \citet{bierkens2014singular} study the positive stability of rank-one perturbation to a singular $M$-matrix and establish a set of sufficient conditions.
One important result is regarding the positive stability of $B_\eta \doteq L + \eta uv^\top $,
where $L$ is a singular $M$-matrix with 0 being an algebraically simple eigenvalue and $u$ and $v$ are elementwise nonnegative.
Lemma 2.11 of \citet{bierkens2014singular} shows that $B_\eta$ is positive stable for sufficiently small $\eta > 0$.
The positive stability of $B_\eta$ for sufficiently large $\eta$ is left as Conjecture 2.17 in \citet{bierkens2014singular}.
Later on, \citet{anehila2022note} show by a counterexample that Conjecture 2.17 of \citet{bierkens2014singular} is false.
Our counterexample further shows that even if we impose additional condition such as $u$ being strictly positive and $v$ being the all-one vector, the positive stability of $B_\eta$ does not necessarily hold for all $\eta > 0$.
Those counterexamples together testify the hardness of characterizing the positive stability of rank-one perturbation to a singular $M$-matrix.
It remains an open problem that whether there is some nontrivial sufficient condition that can guarantee the positive stability of $B_\eta$ for sufficiently large $\eta$.
Notably, as noted by \citet{bierkens2014singular,anehila2022note},
such counterexamples are hard to construct and random sampling is unlikely to find them.
Our experience is the same. 
We have tried to find counterexamples by random sampling and failed.
And we have to construct the counterexample manually based on the theoretical insights in Theorems~\ref{thm:maximal_stability_threshold} \&~\ref{thm:exact_stability_region}.

\begin{ack}
  This work is supported in part by the US National Science Foundation under the awards III-2128019, SLES-2331904, and CAREER-2442098, the Commonwealth Cyber Initiative's Central Virginia Node under the award VV-1Q26-001, and a Cisco Faculty Research Award.
\end{ack}

\bibliographystyle{plainnat}
\bibliography{bibliography}


\newpage
\appendix

\section{Small-$\eta$ positive stability}
\label{app:small_eta_stability}

\begin{lemma}[Small-$\eta$ positive stability]
\label{lemma:small_eta_stability}
Suppose that $P_\pi$ is irreducible and $d_\mu(s)>0$ for every state $s$.
Then there exists $\eta_0>0$ such that
$$A_\eta=D_\mu(I-P_\pi+\eta ee^\top)$$
is positive stable for every $\eta\in(0,\eta_0]$.
\end{lemma}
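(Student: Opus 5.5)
We argue by eigenvalue perturbation of $A_\eta = L + \eta d_\mu e^\top$ around $\eta=0$, with $L = D_\mu(I-P_\pi)$. By Lemma~\ref{lemma:spectrum}, every eigenvalue of $L$ other than $0$ lies in the open right half plane. Since $\ker L=\ker(I-P_\pi)=\operatorname{span}\{e\}$ (Perron--Frobenius, using irreducibility and $D_\mu$ invertible), $0$ is geometrically simple. We first show it is also algebraically simple.

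\emph{Algebraic simplicity of $0$.} Take the left null vector $\ell^\top = d_\pi^\top D_\mu^{-1}$, exactly as in the proof of Theorem~\ref{thm:maximal_stability_threshold}. It satisfies $\ell^\top L = d_\pi^\top(I-P_\pi)=0$. Moreover $\ell^\top e = \sum_s d_\pi(s)/d_\mu(s) > 0$, because $d_\pi>0$ by irreducibility. Suppose there were a Jordan chain $Ly=e$. Then $0=\ell^\top L y = \ell^\top e>0$, a contradiction. Hence $0$ is an algebraically simple eigenvalue of $L$.

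\emph{Continuity for the remaining eigenvalues.} Let $\lambda_1=0$ and let $\lambda_2,\dots,\lambda_n$ be the other eigenvalues, with $\delta\doteq\min_{j\ge2}\Re(\lambda_j)>0$. By continuity of eigenvalues in $\eta$, there is $\eta_1>0$ such that for $\eta\in[0,\eta_1]$:
\begin{itemize}[leftmargin=*]
\item exactly one eigenvalue $\lambda_1(\eta)$ of $A_\eta$ lies in the disk $|z|<\delta/2$;
\item the remaining $n-1$ eigenvalues satisfy $\Re>\delta/2$.
\end{itemize}
Because $A_\eta$ is real and this eigenvalue is isolated, $\lambda_1(\eta)$ is real: otherwise its conjugate would be a second eigenvalue in the same disk. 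Standard simple-eigenvalue perturbation theory also makes $\lambda_1(\eta)$ analytic near $0$.

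\emph{Sign of the simple eigenvalue.} The first-order formula for a simple eigenvalue, with right vector $e$ and left vector $\ell$, gives
\[
\lambda_1'(0)=\frac{\ell^\top (d_\mu e^\top) e}{\ell^\top e}=\frac{(\ell^\top d_\mu)(e^\top e)}{\ell^\top e}=\frac{n}{\ell^\top e}>0,
\]
where we used $\ell^\top d_\mu = d_\pi^\top e = 1$. Hence $\lambda_1(\eta)>0$ for all sufficiently small $\eta>0$. Taking $\eta_0$ below both this threshold and $\eta_1$ gives positive stability of $A_\eta$ on $(0,\eta_0]$.

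The sign argument can be done without perturbation theory. Since $\lambda_1(\eta)$ is real and continuous with $\lambda_1(0)=0$, it suffices to know that $A_\eta$ is nonsingular for every $\eta>0$, which is proved in Theorem~\ref{thm:maximal_stability_threshold}, and to fix its sign. For that, use $\det A_\eta = \eta\, e^\top\operatorname{adj}(L)d_\mu$. For a simple zero eigenvalue, $\operatorname{adj}(L)=c\,e\ell^\top$ with $c=\prod_{j\ge2}\lambda_j/(\ell^\top e)$. The product $\prod_{j\ge 2}\lambda_j$ is positive: nonreal eigenvalues come in conjugate pairs, and the real ones are positive. Therefore $\det A_\eta>0$ for $\eta>0$. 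The other $n-1$ eigenvalues contribute a positive product (same conjugate-pair argument), so the real eigenvalue $\lambda_1(\eta)$ must be positive.

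The main obstacle is the simplicity of $0$ together with the reality of the perturbed eigenvalue. Everything else is continuity bookkeeping, and the key fact is $\ell^\top e>0$ for the left null vector $\ell = D_\mu^{-1}d_\pi$.
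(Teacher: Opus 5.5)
Your proof is correct and follows the paper's route. Like the paper, you establish algebraic simplicity of the zero eigenvalue via the left null vector $\ell^\top=d_\pi^\top D_\mu^{-1}$, compute the first-order shift $\lambda'(0)=n/(\ell^\top e)>0$ (the paper derives it with the implicit function theorem instead of citing simple-eigenvalue perturbation theory), and use continuity for the remaining eigenvalues; your extra determinant check via $\operatorname{adj}(L)=c\,e\ell^\top$ and the reality of the isolated eigenvalue is a valid derivative-free alternative for fixing the sign.
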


\begin{proof}
  Recall that $L\doteq D_\mu(I-P_\pi)$, so $A_\eta=L+\eta d_\mu e^\top$. Lemma~\ref{lemma:spectrum} shows that every nonzero eigenvalue of $L$ has strictly positive real part. It remains to understand the zero eigenvalue of $L$.
  Since $D_\mu$ is invertible, $\ker(L)=\ker(I-P_\pi)$. Because $P_\pi$ is irreducible and row-stochastic, $\ker(I-P_\pi)=\operatorname{span}\{e\}$ by the Perron-Frobenius theorem.

  Let $d_\pi$ be the stationary distribution of $P_\pi$ and define $$\ell^\top\doteq d_\pi^\top D_\mu^{-1}.$$ Then $$\ell^\top L=d_\pi^\top(I-P_\pi)=0,\qquad \ell^\top e=d_\pi^\top D_\mu^{-1}e>0.$$
  The zero eigenvalue of $L$ is algebraically simple. If we suppose otherwise, we know that since its eigenspace is one-dimensional (because $\ker(L)=\operatorname{span}\{e\}$ is one-dimensional), there would exist a generalized eigenvector $y$ such that $Ly=e$. Multiplying by $\ell^\top$ yields $$0=\ell^\top Ly=\ell^\top e>0,$$ which is a contradiction.

  We now compute how this simple zero eigenvalue moves under the perturbation. Define
  $$F(x,\lambda,\eta)\doteq
    \begin{pmatrix}
      A_\eta x-\lambda x\\
      \ell^\top x-\ell^\top e
    \end{pmatrix}.$$
  At $(x,\lambda,\eta)=(e,0,0)$, we have $F(e,0,0)=0$. The derivative of $F$ with respect to $(x,\lambda)$ at this point maps $(y,\theta)$ to
  $$\begin{pmatrix}
      Ly-\theta e\\
      \ell^\top y
    \end{pmatrix}.$$
  This map is injective. If $Ly-\theta e=0$ and $\ell^\top y=0$, then multiplying the first equation by $\ell^\top$ gives $-\theta\ell^\top e=0$, so $\theta=0$. Then $Ly=0$, hence $y=ce$, and $\ell^\top y=0$ gives $c=0$. Since the domain and codomain have the same finite dimension, the map is invertible.

  By the finite-dimensional implicit function theorem, applied after identifying complex vectors with real vectors, there are differentiable functions $x(\eta)$ and $\lambda(\eta)$ near $0$ such that $$A_\eta x(\eta)=\lambda(\eta)x(\eta),\qquad x(0)=e,\qquad \lambda(0)=0.$$
  Differentiating at $\eta=0$ yields
  $$d_\mu e^\top e+Lx'(0)=\lambda'(0)e.$$
  Multiplying by $\ell^\top$ gives
  $$\lambda'(0) =\frac{\ell^\top d_\mu\, e^\top e}{\ell^\top e} =\frac{|\fS|}{d_\pi^\top D_\mu^{-1}e}>0,$$
  because $\ell^\top d_\mu=d_\pi^\top D_\mu^{-1}d_\mu=d_\pi^\top e=1$ and $e^\top e=|\fS|$. Therefore the zero eigenvalue of $L$ moves into the open right half plane for all sufficiently small positive $\eta$.

  Here, all other eigenvalues of $L$ already have strictly positive real part by Lemma~\ref{lemma:spectrum}, and they remain in the open right half plane for sufficiently small $\eta>0$ by continuity of the roots of the characteristic polynomial. Hence there exists $\eta_0>0$ such that $A_\eta$ is positive stable for every $\eta\in(0,\eta_0]$.
\end{proof}

\section{Additional Online TD Runs}
\label{app:additional_online_runs}

Figures~\ref{fig:additional-online-runs-norm} and~\ref{fig:additional-online-runs-distance} repeat the differential TD experiment from Figure~\ref{fig:local-global-stability} for nine additional random seeds.

\begin{figure}[t]
  \centering
  \includegraphics[width=\linewidth]{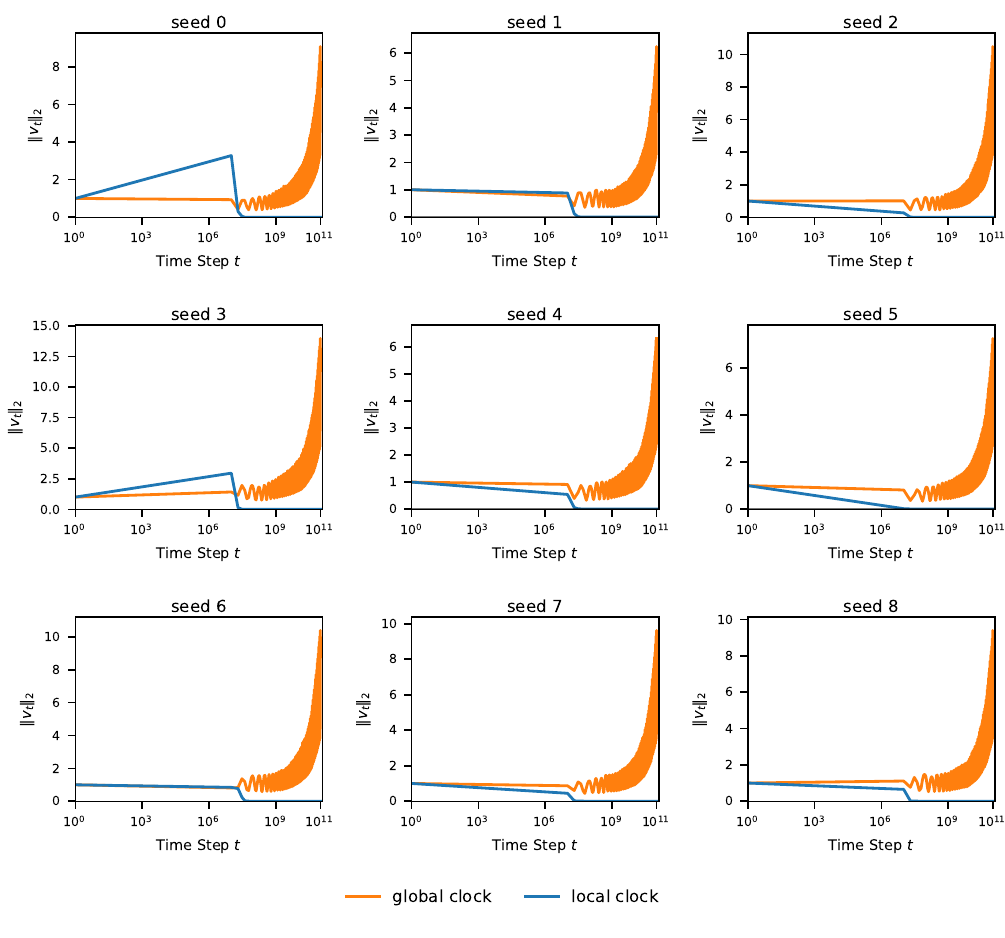}
  \caption{Additional differential TD runs with $\eta=2\alpha$, showing $\norm{v_t}_2$.}.
  \label{fig:additional-online-runs-norm}
\end{figure}

\begin{figure}[t]
  \centering
  \includegraphics[width=\linewidth]{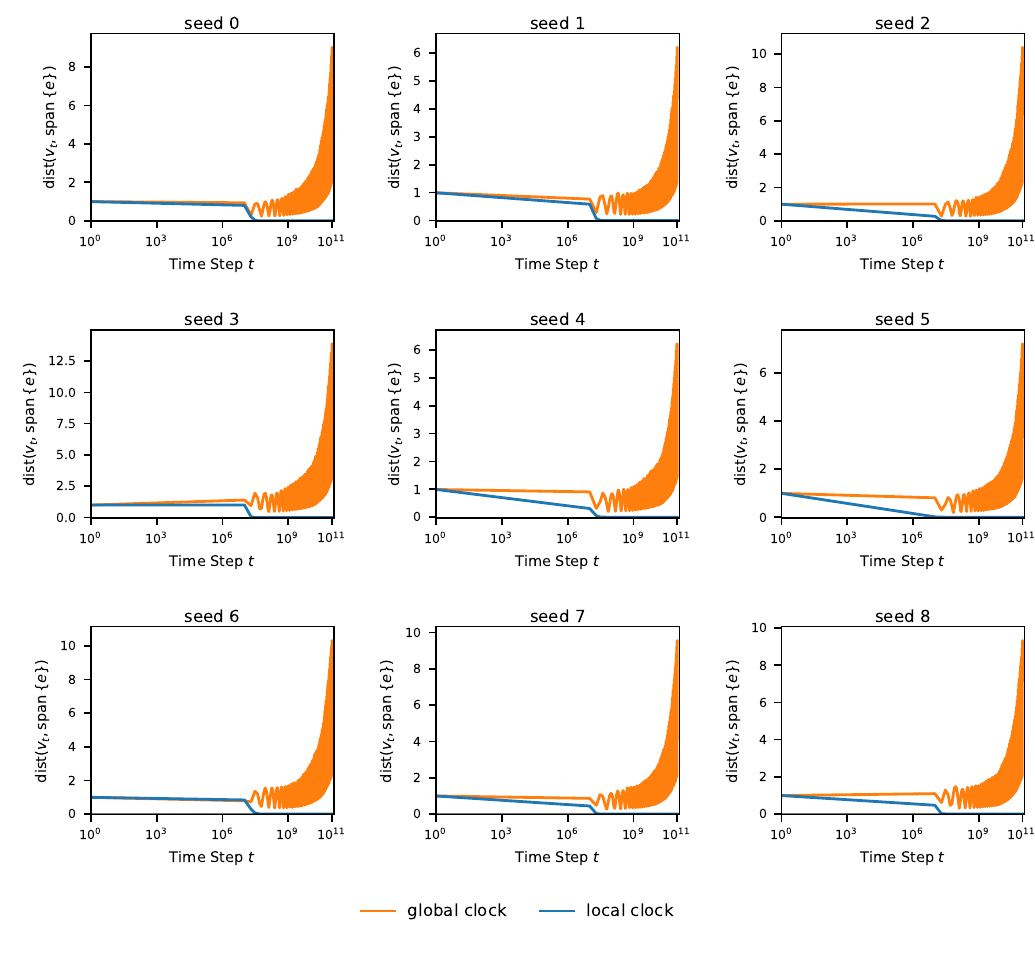}
  \caption{Additional differential TD runs with $\eta=2\alpha$, showing the distance from $v_t$ to $\operatorname{span}\{e\}$.}
  \label{fig:additional-online-runs-distance}
\end{figure}



\end{document}